\newtheorem{theorem}{Theorem}
\newtheorem{proposition}{Proposition}
\newtheorem{assumption}{Assumption}
\title{No More Tuning: Prioritized Multi-Task Learning with Lagrangian Differential Multiplier Methods}
\author{
    Zhengxing Cheng\textsuperscript{\rm 1,2}\thanks{This work was completed during an internship at Alibaba Group.}, 
    Yuheng Huang\textsuperscript{\rm 2},
    Zhixuan Zhang\textsuperscript{\rm 2},
    Dan Ou\textsuperscript{\rm 2},
    Qingwen Liu\textsuperscript{\rm 2}
}
\begin{document}

\maketitle

\begin{abstract}
Given the ubiquity of multi-task in practical systems, Multi-Task Learning (MTL) has found widespread application across diverse domains. In real-world scenarios, these tasks often have different priorities. For instance, In web search, relevance is often prioritized over other metrics, such as click-through rates or user engagement. Existing frameworks pay insufficient attention to the prioritization among different tasks, which typically adjust task-specific loss function weights to differentiate task priorities. However, this approach encounters challenges as the number of tasks grows, leading to exponential increases in hyper-parameter tuning complexity. Furthermore, the simultaneous optimization of multiple objectives can negatively impact the performance of high-priority tasks due to interference from lower-priority tasks.

In this paper, we introduce a novel multi-task learning framework employing Lagrangian Differential Multiplier Methods for step-wise multi-task optimization. It is designed to boost the performance of high-priority tasks without interference from other tasks. Its primary advantage lies in its ability to automatically optimize multiple objectives without requiring balancing hyper-parameters for different tasks, thereby eliminating the need for manual tuning. Additionally, we provide theoretical analysis demonstrating that our method ensures optimization guarantees, enhancing the reliability of the process. We demonstrate its effectiveness through experiments on multiple public datasets and its application in Taobao search, a large-scale industrial search ranking system, resulting in significant improvements across various business metrics.

\end{abstract}

%

\section{Introduction}
In the era of data-driven decision-making, Multi-Task Learning (MTL) has become an essential paradigm, offering notable advantages in managing multiple objectives concurrently~\citep{mtl_overview_nlp,mtl_overview_recommandation}. This is especially relevant in practical systems where multiple objectives must be optimized simultaneously, even though these objectives may sometimes negatively influence each other.

A illustrative example is in the domain of web search, where relevance is typically prioritized over other metrics such as click-through rates (CTR) or user engagement. In a web search system, relevance refers to how well the returned results match the user's query, directly impacting user satisfaction and search quality. While metrics like CTR and engagement are also important, prioritizing relevance ensures that users find the most useful and accurate information first. However, optimizing for secondary objectives like CTR can negatively affect the primary goal of relevance. For example, to increase CTR, a system might prioritize results that are more attention-grabbing but less relevant to the query, thereby reducing the overall relevance of the search results. This trade-off underscores the need for a framework that can effectively balance these tasks without sacrificing the primary goal of relevance.

Despite its widespread adoption, traditional MTL frameworks often encounter challenges in task prioritization, which is typically addressed by adjusting task-specific loss function weights. However, the complexity of hyper-parameter tuning escalates exponentially as the number of tasks increases, especially when using grid search methods. In real-world industrial applications, this complexity is exacerbated by the necessity for online A/B testing to validate the final model effectiveness. A/B testing requires sustained online observation to gather feedback, rendering this exponential complexity impractical in industrial scenarios due to the extended duration and high resource consumption.

To tackle these challenges, we introduce a novel multi-objective optimization framework called No More Tuning (NMT). This framework manages task prioritization in multi-task learning (MTL) by ensuring that secondary tasks are optimized without compromising the performance of the primary task. This is achieved by framing the MTL problem as a constrained optimization problem, where the primary task's performance is maintained as a inequality constraint during the optimization of secondary tasks.

To solve this constrained problem, we use the method of Lagrange multipliers, which allows us to convert it into an unconstrained problem.To accommodate the gradient descent optimization methods widely used in most MTL approaches, we discuss solving the Lagrangian function using gradient descent. Consequently, the NMT framework can seamlessly integrate with any MTL method that uses gradient descent optimization algorithms. As a result, NMT can be integrated with any gradient descent-based MTL method. Notably, since the task prioritization is incorporated into the inequality constraints, no additional hyper-parameters are required in NMT framework. 

\begin{figure}[t]
    \centering
    \includegraphics[width=0.5\textwidth]{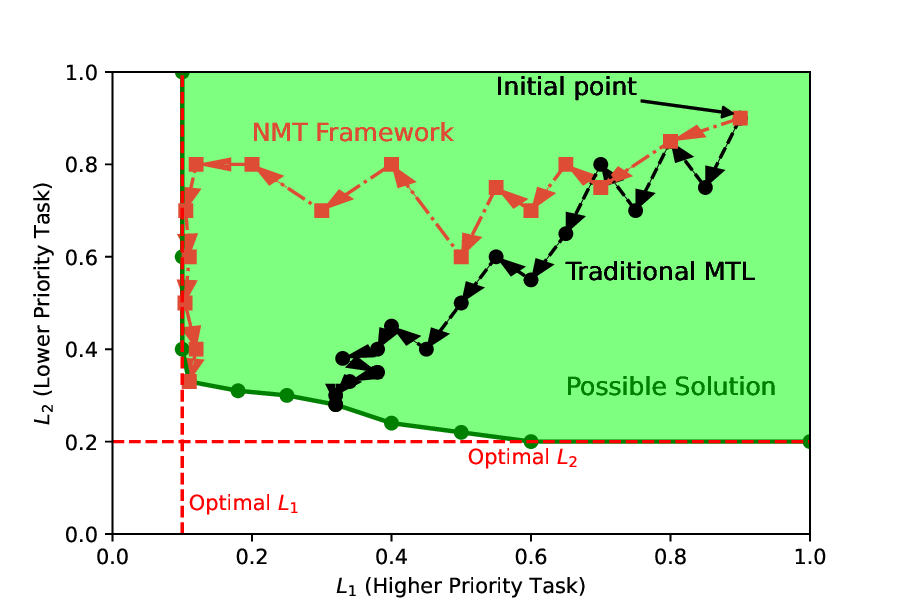}
    \caption{Optimization trajectories for two strategies. The traditional approach compromises between tasks, often sub-optimally affecting the primary objective. The NMT framework first optimizes the primary task, then refines secondary tasks while maintaining the primary task's performance.}
    \label{fig:trajectory_demo}
\end{figure}

Figure \ref{fig:trajectory_demo} gives a visual representation of the optimization process with the NMT framework. The figure shows the trajectories of two different optimization strategies, highlighting how the NMT framework effectively balances multiple objectives. The traditional approach attempts to find a compromise between tasks, often resulting in sub-optimal performance for the primary objective. In contrast, the trajectory for the NMT framework demonstrates a distinct path through the optimization space. Initially, this path focuses solely on optimizing the primary task. Once the primary task achieves its optimal performance, the framework then optimizes secondary tasks while maintaining the performance of the primary task. This trajectory visually emphasizes how the NMT framework ensures that the primary objective remains uncompromised while improving secondary tasks.

The key advantages of our method include:

\textbf{No Need for Parameter Adjustments:} The NMT framework embeds task prioritization directly into the equality constraints of the optimization problem. This eliminates the need for manual parameter tuning or adjustments, which are often necessary in traditional MTL methods to balance different tasks. By removing this requirement, our framework significantly reduces the complexity and time involvxd in the model training process. This also reduces the risk of sub-optimal performance due to improper parameter settings.

\textbf{Theoretical Background:}  A key advantage of the NMT framework is its ability to provide theoretical background regarding the performance of high-priority tasks. We give the theoretical proof in section Theoretical Analysis. By setting the primary task's performance as a strict inequality constraint, the framework ensures that the optimization direction adheres to these constraints. This contrasts with traditional methods that might require extensive empirical validation to ensure that primary tasks are not adversely affected.

\textbf{Easy Integration:} The NMT framework can be easily integrated with all gradient descent-based MTL methods, offering a simple yet effective solution for task prioritization. 

\textbf{Broad Effectiveness:} We conducted experiments on multiple multi-task public datasets, to validate the effectiveness of our method. Furthermore, the NMT framework has been implemented in Taobao search, a large-scale industrial online systems with billions of active users, resulting in significant gains in online A/B testing. This demonstrates the framework's versatility and effectiveness across different domains and applications.

\section{Related Works}
\subsection{Overview of MTL}
Multi-Task Learning (MTL) has become increasingly prevalent in real world applications, particularly in recommendation and search engines where multiple objectives should be considered. Traditional methods that optimizing separate task-specific loss functions can result in conflicts and reduced performance across tasks \citep{mtl:Sharedbottom}. To overcome these challenges, various research has been conducted to balance the trade-offs between tasks.

In Multi-task learning, one active research area is parameter-sharing architecture. We provide an overview of the evolution of MTL architectures, which can be categorized into four principal types: Tower-level task-specific models \cite{mtl:Sharedbottom, mtl:crosshare_bottom}, Gate-level task-specific models\cite{gate-level}, Expert-level task-specific models\cite{mtl:PLE} and Embedding-level task-specific models \cite{mtl:STEM}. The parameter-sharing framework enables multiple tasks to utilize a common set of experts, with task-specific gating networks directing the flow of information. 

Various MTL optimization methods have been explored to address negative transfer between tasks. A prevalent approach is task weighting, which balances multiple tasks by combining loss objectives with specific weights, typically determined through extensive hyper-parameter tuning \cite{priority:5, priority:6}. Dynamic gradient-based methods \cite{priority:4, priority:7, priority:8} have been developed to manage task weight balancing by employing techniques such as gradient normalization and adjustment to reduce disparities between tasks. Pareto optimization framework\cite{priority:9} has also been used for coordinate these objectives with a weighted aggregation. However, these methods can not specifically prioritize one task over others in the optimization process.

Recent studies have explored more principled approaches to MTL, such as the framework developed by \cite{mtl:3}, which integrates first-order gradient-based algorithms into the ranking process. Further advancements in MTL, like those introduced by \cite{mtl:1}, involve training separate models on distinct objectives and then aggregating their scores with stochastic label aggregation. Additionally, distillation-based ranking solutions \cite{mtl:2,mtl:4} use the distilled models to balance the optimization of multiple tasks in the learning-to-rank setting. However, these approaches all still rely on parameter tuning to balance task performance and fail to explicitly guarantee the prioritization of critical tasks. When implementating in the real-world system, the practical challenges including tuning and computational complexity persist.

In contrast to these techniques, our optimization framework offers a novel approach that simplifies the optimization process and enhances the performance of high-priority tasks without requiring extensive manual tuning. It is straightforward to implement and applicable across various MTL architectures.

\subsection{Constrained Optimization and Application}

Constrained optimization is a classical optimization problem, with Lagrangian methods being among the most well-established approaches \cite{co:original_paper}. Constrained optimization has been widely applied in reinforcement learning (RL), particularly in safe RL tasks \cite{co:PID}. By using Lagrangian methods, the original constrained problem is transformed into an unconstrained form, facilitating optimization.  Theoretical convergence proofs and methods for updating Lagrange multipliers have been developed to support this process\cite{co:0dualitygap, co_Reward_Constrained_Optimization}.

Inspired by the successful application of constrained optimization as well as the Lagrangian methods in RL, our work aims to adapt these techniques to Multi-Task Learning (MTL) problems. We propose reformulating the prioritized multi-objective optimization problem as a multi-step constrained optimization problem, utilizing the principles of constrained optimization to tackle challenges in MTL.

\section{Problem Formulation}
Without loss of generality, consider a scenario with two tasks, where Task 1 has higher priority than Task 2. Task 1 and Task 2 are represented by loss functions 
$f_1(\theta)$ and $f_2(\theta)$, respectively. The optimization problem can be formulated as:

\begin{equation}
    \min_{\theta} f_1(\theta), f_2(\theta)
\end{equation}

Our objective is as follows:
\begin{itemize}
    \item Primary Task (Higher Priority): Minimize $f_1(\theta)$
    \item Secondary Task (Lower Priority): Minimize $f_2(\theta)$ while ensuring that the performance of $f_1(\theta)$ is not compromised.
\end{itemize}

The question can be extended to any number of tasks. Consider a scenario with \( n \) tasks, where each task \( i \) has an associated loss function \( f_i(\theta) \) and a priority level \( p_i \). The priority levels are such that \( p_1 \) is the highest priority and \( p_n \) is the lowest, with \( p_i \) being monotonic (i.e., \( p_1 > p_2 > \cdots > p_n \)). The optimization problem can be formulated as:

\begin{equation}
    \min_{\theta} f_1(\theta), f_2(\theta), \ldots, f_n(\theta)
\end{equation}

The requirement is to optimize each subsequent tasks \( f_i(\theta) \) for \( i > 1 \) while ensuring that the performance of higher priority tasks \( f_j(\theta) \) for \( j < i \) is maintained.

\section{Proposed No More Tuning (NMT) Optimization Framework}
\subsection{Optimization under two tasks}
Given that \( f_1(\theta) \) is prioritized, we need \( f_2(\theta) \) to be optimized while keeping \( f_1(\theta) \) at its optimal value \( f_1(\theta^*) \), found from:

\begin{equation}
    \theta^* = \arg \min_{\theta} f_1(\theta)
\end{equation}

The optimization problem for \( f_2(\theta) \) is then:

\begin{equation}
\setlength\arraycolsep{1.5pt}
\begin{array}{l@{\qquad}  c r c r}
    \displaystyle  \min_{\theta}   &   f_2(\theta) \\
         \text{s.t.} &  f_1(\theta) \leq f_1(\theta^*)  \\
\end{array}
\label{eq:constrain_two_task}
\end{equation}

Solving this problem ensures that the performance of the high-priority task remains unaffected while optimizing the low-priority task. The NMT framework aims to address this optimization problem by employing gradient descent.

\subsection{Extension to arbitrary $m$ tasks}
The above framework can naturally be extended to any number of tasks, represented by loss functions \( f_1(\theta), f_2(\theta), \ldots, f_m(\theta) \), with Task 1 having the highest priority and Task $m$ the lowest. The problem can be formulated as:

\begin{equation}
    \min_{\theta} f_1(\theta), f_2(\theta), \ldots, f_m(\theta)
\end{equation}

The optimization steps are:

\begin{enumerate}
    \item Minimize \( f_1(\theta) \) (Highest Priority) to obtain \( \theta^*_1 \) and \( f_1(\theta^*_1) \).
    \item Minimize \( f_2(\theta) \) (Secondary Priority) subject to \( f_1(\theta) \leq f_1(\theta^*_1) \), to obtain \( \theta^*_2 \).
    \item Minimize \( f_3(\theta) \) (Tertiary Priority) subject to \( f_1(\theta) \leq f_1(\theta^*_2) \) and \( f_2(\theta) \leq f_2(\theta^*_2) \), to obtain \( \theta^*_3 \). 
    \item Continue this process for subsequent tasks.
\end{enumerate}

For each task \( f_i(\theta) \), the problem is formulated as:

\begin{equation}
\label{eq_m}
\begin{array}{l@{\qquad}  c r c r}

   \displaystyle   \min_{\theta}   &   f_i(\theta) \\
         \text{s.t.} &  f_1(\theta) \leq f_1(\theta^*_{i-1}) \\
                      &  f_2(\theta) \leq f_2(\theta^*_{i-1}) \\
                      &  \vdots \\
                      &  f_{i-1}(\theta) \leq f_{i-1}(\theta^*_{i-1})\\
\end{array}
\end{equation}

This approach ensures that each task \( f_i(\theta) \) is optimized in sequence until the $m_{th}$ task while preserving the optimal values of all higher-priority tasks.

\subsection{No More Tuning (NMT) Algorithm}
The overall No More Tuning (NMT) optimization framework for $m$ tasks is illustrated in Algorithm \ref{alg:algorithm}. We convert the inequality constraint problem (\ref{eq_m}) into an unconstrained optimization problem using the Lagrange multiplier method and solve it with a gradient-based approach.

\begin{algorithm}[h]
\caption{NMT Algorithm for $m$ Tasks}
\label{alg:algorithm}
\textbf{Input}: 
\begin{itemize}
    \item $\eta$: Learning rate for model parameters
    \item $\tau$: Learning rate for Lagrange multipliers
    \item $f_k(\theta)$: Objective function for task $k$ ($k \in \{1, 2, \dots, m\}$)
    \item $\lambda_{\text{init}}$: Initial value for Lagrange multipliers
\end{itemize}

\begin{algorithmic}[1]
\STATE \textbf{Step 1:} Optimize the primary task (task 1) to minimize $f_1(\theta)$ until convergence. Save the optimized parameters as $\theta^*$.
\FOR{each task $k$ from 2 to $m$}
    \STATE Initialize $\theta$ with $\theta^*$ and $\lambda_j$ with $\lambda_{\text{init}}$ for $j = 1, \ldots, k-1$.
    \REPEAT
        \STATE Compute the aggregate loss for task $k$:
        \[
        \mathcal{L} = f_k(\theta) + \sum_{j=1}^{k-1} \lambda_j \cdot (f_j(\theta) - f_j(\theta^*))
        \]
        \STATE Update $\theta$ using gradient descent:
        \[
        \theta \gets \theta - \eta \cdot \nabla_{\theta} \mathcal{L}
        \]
        \STATE Update each $\lambda_j$ using gradient ascent:
        \[
        \lambda_j \gets \lambda_j + \tau \cdot (f_j(\theta) - f_j(\theta^*))
        \]
    \UNTIL{convergence}
    \STATE Update $\theta^*$ with the optimized $\theta$
\ENDFOR
\end{algorithmic}
\end{algorithm}

The algorithm is divided into two main stages:
\begin{enumerate}
    \item \textbf{Optimize the main task}: First, we perform optimization for the first task (the main task) to find the parameters $\theta^*$ that minimize the objective function $f_1(\theta)$. This stage lasts until the objective function converges.
    \item \textbf{Iteratively optimize the remaining tasks}: After the optimization of the main task, we optimize each of the remaining tasks in turn. For each task $k$ (starting from the second task), we initialize the model parameters to $\theta^*$ and set the Lagrange multiplier to the initial value $\lambda_{\text{init}}$. Then, the model parameters are updated using the gradient descent algorithm, while the Lagrange multiplier is updated using the gradient ascent algorithm. This process continues until convergence. After each iteration, we update the current optimized parameters to $\theta^*$ in preparation for the optimization of the next task.
\end{enumerate}

The NMT framework is very easy to implement because it only requires gradient ascent of the Lagrange multiplier $\lambda$ based on the optimization of the secondary task. No additional hyper-parameters are introduced in the whole framework.

\subsubsection{Re-scaling method}
When implementing the algorithm, we find that when $\lambda$ is large, the loss function can become excessively large, leading to substantial updates in the parameters $\theta$ during gradient descent. These large updates can destabilize the learning process and violate the assumption outlined in the next chapter, where we aim for small changes in $\theta$. Thus, we apply a re-scaled loss function when updating $\theta$. When performing gradient descent on $\theta$,
$$
\mathcal{L} = \frac{1}{1 + \sum_{j=1}^{k-1} \lambda_j} \left( f_k(\theta) + \sum_{j=1}^{k-1} \lambda_j \left( f_j(\theta) - f_j(\theta^\star_j) \right) \right)
$$

This re-scaling formulation ensures the combination of each loss is a normalized convex combination, preventing any potential explosion of the loss function. All our experiments incorporate this re-scaling formulation for the loss function during the optimization process.

\section{Theoretical Analysis}
\label{sec:theory_analysis}






We now consider a relaxed version of the problem that allows for some tolerance in the constraints. Specifically, we examine the constrained optimization problem (CO) with $m$ objectives, where the $i_{th}$ objective is allowed to perform at most $r_i$ worse than $f_i(\theta^*)$:

\begin{equation}
\setlength\arraycolsep{1.5pt}
\begin{array}{l@{\qquad}  c r c r}
     \displaystyle \min_{\theta}   &   f_m(\theta) \\
         \text{s.t.} &  f_i(\theta)) \leq f_i(\theta^*)) + r_i, i = 1,... m-1,  \\
\end{array}
\label{eq:constrain}
\tag{CO}
\end{equation}
where $f_i$ is the $i_{th}$ objective function and $r_i$ is a small positive tolerance term for the $i_{th}$ objective.

From the constrained problem, we can define the Lagrangian function as:
\begin{equation}
L(\theta, \lambda) = f_m(\theta) +  \sum_{i=1}^{m-1} \lambda_i \left( f_i(\theta) - f_i(\theta^*) - r_i \right).
\end{equation}

Then the unconstrained dual optimization problem (DO) can then be formulated as:
\begin{equation}
\max_\lambda\min_\theta L(\theta, \lambda)
\tag{DO}
\end{equation}

If $f_i$ is a convex function (e.g. in logistic regression), the strong duality between CO and DO holds \cite{co:original_paper} and the convergence of the NMT optimization is guaranteed .

However, in many practical scenarios, the objective functions are not convex. We aim to show that, under appropriate assumptions, strong duality still holds despite non-convexity, guiding our optimization method.

\begin{assumption}
The model training is free from over-fitting.
\end{assumption}
The first assumption ensures that the optimal solution corresponds to the minimum of the objective function. It guarantees monotonicity between the objective function and the target performance.

\begin{assumption}
Within the feasible region satisfying the constraints, the difference of $\theta$ is bounded by $\epsilon$, and $f_i$ is Lipschitz continuous.
\end{assumption}

This assumption specifies properties of the objective function useful for analyzing duality.

We also introduce the definition of the perturbation function associated with CO for later proofs.

\begin{equation}
\begin{aligned}
P(\xi) \triangleq & \min _{\theta} f_m(\theta) \\
& \operatorname{subject} \text { to } f_i(\theta) \leq f_i(\theta^*) + r_i - \xi_i, i=1 \ldots m-1
\end{aligned}
\label{PF}
\end{equation}

We now demonstrate two properties under the given assumptions:

\begin{proposition}
Slater's condition holds for CO
\end{proposition}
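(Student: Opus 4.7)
The plan is to verify Slater's condition for CO by exhibiting an explicit strictly feasible point. First, I would recall what Slater's condition asks for in this setting: the existence of some $\theta$ such that the inequalities $f_i(\theta) < f_i(\theta^*) + r_i$ hold simultaneously for every $i = 1, \ldots, m-1$. Since only existence is required, it suffices to produce a single such point.

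For the candidate, I would take $\theta = \theta^*$ itself, the reference parameter inherited from the previous stage of the NMT procedure that defines the right-hand side of every constraint. Substituting this choice, the slack in the $i$-th constraint becomes $f_i(\theta^*) + r_i - f_i(\theta^*) = r_i$, and the statement of CO explicitly describes each $r_i$ as a small positive tolerance. Because $r_i > 0$, every inequality is strictly satisfied, so $\theta^*$ lies in the strict interior of the feasible set, which is exactly Slater's condition.

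I do not foresee a genuine obstacle: the whole argument is a one-line bookkeeping consequence of the strict positivity of the tolerance terms $r_i$, and it uses neither convexity nor any of the standing assumptions on $f_i$. Assumptions 1 and 2 (absence of over-fitting, boundedness of $\theta$-variation within the feasible region, and Lipschitz continuity) are not needed here; they will enter only in the subsequent proposition, where Slater's condition is leveraged to upgrade weak duality into strong duality between CO and DO despite the non-convexity of the objectives.
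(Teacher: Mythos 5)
Your proof is correct and follows the same route as the paper: both take $\theta^*$ as the strictly feasible point and invoke the strict positivity of each tolerance $r_i$ to conclude $f_i(\theta^*) < f_i(\theta^*) + r_i$. If anything, your write-up is slightly more careful than the paper's one-liner, which states the inequality non-strictly even though strict feasibility is what Slater's condition requires.
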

\begin{proof}
As $\theta^*$ is a feasible solution and $r_i$ is positive, $ f_i(\theta^*) \leq f_i(\theta^*) + r_i $ satisfies trivially.
\end{proof}

\begin{proposition}
Under assumption 2, perturbation function $P(\xi)$ is approximately convex when $\epsilon$ is small enough.
\end{proposition}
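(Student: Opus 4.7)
The plan is to exploit the fact that Assumption~2 forces the feasible region to be small: since any two points satisfying the constraints $f_i(\theta)\le f_i(\theta^*)+r_i$ differ by at most $\epsilon$, and $\theta^*$ itself lies in this region (trivially satisfying each constraint by positivity of $r_i$), every feasible $\theta$ sits in an $\epsilon$-ball around $\theta^*$. Combined with the Lipschitz continuity of $f_m$ with some constant $L$, this will localize the perturbation function $P(\xi)$ to a narrow interval, from which approximate convexity follows essentially for free (since an approximately constant function is approximately convex).

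First I would restrict attention to perturbations $\xi$ in the regime $0\le\xi_i\le r_i$, so that tightening the original constraints still leaves $\theta^*$ feasible and keeps the problem bounded. For any such $\xi$, let $\theta_\xi$ be a minimizer defining $P(\xi)=f_m(\theta_\xi)$. Using $\theta^*$ as a trial point gives the upper bound $P(\xi)\le f_m(\theta^*)$, and since $\theta_\xi$ lies in the original feasible region, Assumption~2 yields $\|\theta_\xi-\theta^*\|\le\epsilon$, so the Lipschitz bound gives $P(\xi)\ge f_m(\theta^*)-L\epsilon$. Thus $P$ takes values in an interval of width $L\epsilon$.

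Next I would assemble the approximate-convexity inequality. For any $\xi^{(1)},\xi^{(2)}$ in the regime above and any $\alpha\in[0,1]$, set $\xi^{(\alpha)}=\alpha\xi^{(1)}+(1-\alpha)\xi^{(2)}$, which also lies in the regime. Combining the two bounds:
\begin{equation}
P(\xi^{(\alpha)})\;\le\; f_m(\theta^*)\;\le\;\bigl(\alpha P(\xi^{(1)})+(1-\alpha)P(\xi^{(2)})\bigr)+L\epsilon,
\end{equation}
so $P$ satisfies Jensen's inequality up to an additive $L\epsilon$ slack. Since $L$ is fixed by the Lipschitz constant of $f_m$ and $\epsilon\to 0$ is the controlling parameter in Assumption~2, this is the precise sense in which $P$ is approximately convex. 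One could also observe that $P$ is monotone nonincreasing in each coordinate (loosening a constraint only helps the minimum), which rules out pathological shapes inside the interval.

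The main obstacle I anticipate is not the mechanics of the estimate but pinning down what ``approximately convex'' is meant to formalize in the paper's subsequent duality argument: the later result presumably invokes strong duality via convexity of $P$ at the origin, so the slack $L\epsilon$ must be shown to translate into an $O(\epsilon)$ duality gap rather than merely a qualitative statement. A secondary subtlety is making sure the argument is valid when $\theta^*$ is not the unique minimizer of $f_1$ or when the minimizer of the perturbed problem is attained on the boundary of the Lipschitz domain; both can be handled by an $\arg\min$-selection argument but they are the places where one must be careful that Assumption~2's diameter bound really applies.
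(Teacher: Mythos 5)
Your route is genuinely different from the paper's. The paper proves the approximate Jensen inequality directly for an arbitrary pair of perturbations: it takes the perturbed minimizers $\theta^*(\xi_1)$, $\theta^*(\xi_2)$, forms the interpolate $\theta_t = t\,\theta^*(\xi_1) + (1-t)\,\theta^*(\xi_2)$, and uses Lipschitz continuity together with $\|\theta^*(\xi_1)-\theta^*(\xi_2)\|\le\epsilon$ to show that $f_i(\theta_t)$ equals the linear interpolation of the $f_i$-values up to $\mathcal{O}(\epsilon)$; hence $\theta_t$ is (approximately) feasible for the interpolated perturbation and its objective is (approximately) $tP(\xi_1)+(1-t)P(\xi_2)$, with infeasible perturbations dispatched by $P=+\infty$. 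You instead show that $P$ has oscillation at most $L\epsilon$ on the box $0\le\xi_i\le r_i$ (trial point $\theta^*$ for the upper bound, the diameter bound plus Lipschitz continuity for the lower bound) and conclude approximate convexity with an explicit additive slack $L\epsilon$. Within that box your estimates are correct, and quantifying the slack is cleaner than the paper's ``$\mathcal{O}(\epsilon)$ is negligible.''

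However, the restriction to $0\le\xi_i\le r_i$ is a real gap rather than a cosmetic choice. The convexity of the perturbation function that the subsequent strong-duality theorem needs must hold for general $\xi$ --- in particular for $\xi$ with negative components, i.e.\ loosened constraints, which is precisely the regime the Lagrangian dual probes at $\xi=0$. There both of your bounds can fail: for $\xi_i<0$ the perturbed feasible set is strictly larger than the feasible region of CO, so Assumption 2 (as stated, a diameter bound on that feasible region) no longer confines the perturbed minimizer to an $\epsilon$-ball around $\theta^*$, and $P(\xi)$ may fall well below $f_m(\theta^*)-L\epsilon$; for $\xi_i>r_i$ the trial-point bound $P(\xi)\le f_m(\theta^*)$ fails while $P(\xi)$ may still be finite. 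The paper's interpolation-of-minimizers argument covers these cases (under its reading of Assumption 2 as bounding the distance between any two perturbed optima), which is why it needs no box restriction; to make your near-constancy argument serve the duality theorem you would have to either adopt that stronger reading explicitly or handle the out-of-box perturbations separately. A minor additional slip: $P$ is nondecreasing, not nonincreasing, in each coordinate, since increasing $\xi_i$ tightens the constraint $f_i(\theta)\le f_i(\theta^*)+r_i-\xi_i$.
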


The detailed proof of Proposition 2 will be provided in the Appendix of the extended version, which has been published on arXiv. Denote $\theta^*(\xi)$ as optimal $\theta$ for given specific $\xi$. We derive that when $\|\theta^*(\xi_2) - \theta^*(\xi_1)\|$ which is bounded by the $\epsilon$ is sufficiently close to 0, the convexity holds. 

\begin{theorem}
Under Assumption 1,2, when $\epsilon$ is small enough, the strong duality holds for CO and DO.
\end{theorem}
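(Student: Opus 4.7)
My plan is to reduce the strong duality claim to a subgradient statement about the perturbation function $P$ defined in (\ref{PF}), and then supply the two properties needed (convexity near the origin together with an interior point of $\mathrm{dom}\,P$) using Propositions 1 and 2. The standard geometric characterization of strong duality says that (CO) and (DO) have zero duality gap if and only if there exists $\lambda^\star \ge 0$ with $P(\xi) \ge P(0) - \langle \lambda^\star, \xi\rangle$ for all $\xi$, i.e.\ $-\lambda^\star \in \partial P(0)$. Given such a $\lambda^\star$, substituting into the definition of $P$ yields $\min_\theta L(\theta,\lambda^\star) \ge P(0)$, while weak duality supplies the reverse inequality, so the gap closes exactly.

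To produce this subgradient, I first exploit Proposition 1 to localize the effective domain of $P$ near the origin. Because the Slater point $\theta^\star$ enjoys strict slack $r_i > 0$ in every constraint of (CO), any perturbation $\xi$ with $\|\xi\|_\infty < \min_i r_i$ keeps $\theta^\star$ feasible, forcing $P(\xi) \le f_m(\theta^\star) < \infty$; hence $0$ lies in the interior of $\mathrm{dom}\,P$. Next, I take the $\epsilon$ of Assumption 2 small enough to upgrade Proposition 2 from ``approximately convex'' to \emph{exact} convexity on some open neighborhood $U$ of $0$. The hint following Proposition 2 is what makes this upgrade legitimate: once $\|\theta^\star(\xi_2)-\theta^\star(\xi_1)\|$ is made small enough, the convexity inequality for $P$ holds without any residual term, and this displacement is uniformly bounded by $\epsilon$ via Assumption 2 and Lipschitz continuity of the $f_i$.

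With $P$ proper, convex on an open neighborhood of $0$, and $0 \in \mathrm{int}\,\mathrm{dom}\,P$, the classical subdifferentiability theorem for convex functions \citep{co:original_paper} guarantees $\partial P(0) \neq \emptyset$. Pick any $-\lambda^\star \in \partial P(0)$. Componentwise non-negativity $\lambda^\star \ge 0$ follows from monotonicity of $P$ in $\xi$: tightening an inequality constraint cannot reduce the optimal value, a monotonicity that Assumption 1 (no over-fitting) preserves by ensuring the optimizer tracks the true minimum of $f_m$ on the feasible set. Substituting this $\lambda^\star$ into the identity from the first paragraph yields exact strong duality, which is what the theorem asserts.

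The main obstacle is precisely the upgrade from the qualitative ``approximately convex'' statement of Proposition 2 to genuine convexity on a fixed neighborhood $U$ of $0$. The point is that Proposition 2's hint only concerns a single displacement $\|\theta^\star(\xi_2)-\theta^\star(\xi_1)\|$, whereas convexity on $U$ requires a uniform bound over all pairs $\xi_1,\xi_2 \in U$; Assumption 2 supplies this uniform bound once $\epsilon$ is chosen tight enough relative to the Lipschitz constants. A secondary subtlety is verifying $\lambda^\star \ge 0$ without invoking KKT conditions directly, which the monotonicity argument above handles cleanly and is the reason we need Assumption 1.
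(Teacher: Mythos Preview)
Your proposal is correct and follows essentially the same approach as the paper: the paper's proof simply cites the classical fact that Slater's condition together with convexity of the perturbation function implies strong duality \citep{proof:1}, and then invokes Propositions~1 and~2 for those two ingredients. You have unpacked that cited result via the subgradient characterization of $P$ at the origin, which is precisely the standard proof of the fact the paper appeals to; the only extra content you add is making explicit why $0\in\mathrm{int}\,\mathrm{dom}\,P$ and why $\lambda^\star\ge 0$, both of which the paper leaves implicit in the citation.
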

\begin{proof} 
The strong duality holds when two conditions are satisfied Slater's condition and the convexity of the perturbation function\cite{proof:1}. We have demonstrated both conditions in the preceding propositions.
\end{proof} 

Theorem 1 indicates that if the change in $\theta^*$ is sufficiently small, the optimal solution of DO will also be the optimal solution of CO. Then we can get a feasible solution by optimizing the unconstrained dual problem DO which is more convenient. The NMT framework aims to find the local minimum $\theta$ for $f_m$ that satisfies the constraints, ensuring minimal deviation from previously identified optimal $\theta^*$. The min-max optimization requirement suggests a two-step approach: gradient descent is performed on \(\theta\) to minimize \(f_i\), while gradient ascent is applied to \(\lambda\) to satisfy the maximization requirement. Consequently, our NMT optimization framework, which performs gradient descent on \(\theta\) and gradient ascent on \(\lambda\), is expected to converge to an optimal solution.

\section{Experiments}

\subsection{Experimental Results on Public Datasets}

\subsubsection{Experimental Setup}

In our experiments, we selected two public MTL recommendation datasets, namely TikTok and QK-Video \cite{exp:QKV} for performance evaluation. The TikTok dataset includes two objectives: Finish and Like, while the QK-Video dataset contains two objectives: Click and Like. In our experiments, we prioritize the Like objective as the primary task, with Finish and Click serving as the respective secondary tasks in the two datasets.

NMT is designed to be fully compatible with most existing MTL approaches. Rather than directly competing with these methods, it serves as a complementary framework that enhances their performance by emphasizing task prioritization. In this section, we integrate the NMT framework with parameter-sharing MTL architectures, including Shared-Bottom \cite{mtl:Sharedbottom}, OMoE \cite{mtl:OMoE}, MMoE \cite{mtl:OMoE}, and PLE \cite{mtl:PLE}. Additionally, we apply the NMT framework to a gradient-based MTL method \cite{famo}, with implementation details provided in the appendix. 

\subsubsection{Performance Evaluation}
\begin{table*}[h]
\centering
\caption{Overall performance on TikTok.}
\resizebox{1.0\textwidth}{!}{

    \begin{tabular}{lcccccc}
    \toprule
    \multirow{2}{*}{Model} & \multicolumn{3}{c}{Without NMT} & \multicolumn{3}{c}{With NMT} \\
    \cmidrule(lr){2-4} \cmidrule(lr){5-7}
     & Finish AUC & Like AUC & Average AUC & Finish AUC & Like AUC & Average AUC \\
    \midrule
    Shared-Bottom & 0.7504 & 0.9031 & 0.8267 & 0.7510 (+0.06\%) & 0.9069 (\textbf{+0.38\%}) & 0.8289 (+0.22\%) \\
    OMoE          & 0.7505 & 0.9021 & 0.8263 & 0.7509 (+0.04\%) & 0.9059 (\textbf{+0.38\%}) & 0.8284 (+0.21\%) \\
    MMoE          & 0.7503 & 0.9015 & 0.8259 & 0.7506 (+0.03\%) & 0.9056 (\textbf{+0.41\%}) & 0.8281 (+0.22\%) \\
    PLE           & 0.7506 & 0.9027 & 0.8266 & 0.7511 (+0.05\%) & 0.9076 (\textbf{+0.49\%}) & 0.8293 (+0.27\%) \\
    \bottomrule
    \end{tabular}
    }
    \label{table1}
\end{table*}

\begin{table*}[h]
\centering
\caption{Overall performance on QK-Video}
\resizebox{1.0\textwidth}{!}{

    \begin{tabular}{lcccccc}
    \toprule
    \multirow{2}{*}{Model} & \multicolumn{3}{c}{Without NMT} & \multicolumn{3}{c}{With NMT } \\
    \cmidrule(lr){2-4} \cmidrule(lr){5-7}
     & Click AUC & Like AUC & Average AUC & Click AUC & Like AUC & Average AUC \\
    \midrule
    Shared-Bottom & 0.9128 & 0.9409 & 0.9268 & 0.9126 (-0.02\%) & 0.9417 (\textbf{+0.07\%}) & 0.9271 (+0.03\%) \\
    OMoE          & 0.9125 & 0.9414 & 0.9270 & 0.9119 (-0.07\%) & 0.9424 (\textbf{+0.1\%}) & 0.9272 (+0.02\%) \\
    MMoE          & 0.9126 & 0.9412 & 0.9269 & 0.9125 (-0.01\%) & 0.9417 (\textbf{+0.05\%}) & 0.9271 (+0.02\%) \\
    PLE           & 0.9126 & 0.9422 & 0.9274 & 0.9122 (-0.04\%) & 0.9425 (\textbf{+0.03\%}) & 0.9273 (-0.01\%) \\
    \bottomrule
    \end{tabular}
}
\label{table2}
\end{table*}

As shown in Table 1, NMT led to significant improvements in Like AUC across all models in the TikTok dataset. This demonstrates that NMT effectively enhances the performance of the high-priority task. Meanwhile, the other metrics, such as Finish AUC and Average AUC, also saw modest improvements, indicating that the performance of these secondary tasks remains stable and does not deteriorate when prioritizing the Like task. The PLE model, particularly when paired with NMT, stands out with the highest AUC across all metrics.

Table 2 shows the performance on the QK-Video dataset. Similar to the TikTok dataset, Like AUC increased across all models with NMT While there were slight decreases in Click AUC for some models, these reductions were minimal, indicating that the prioritization of the Like task does not severely impact the performance of other metrics. The Average AUC remained largely stable, demonstrating that the overall model performance is well-balanced and not significantly compromised by the prioritization of the high-priority Like task.

Overall, these results demonstrate that NMT effectively prioritizes the high-priority task, improving its performance while maintaining satisfactory levels of performance in other metrics. In some conditions, it can even lead to simultaneous improvements for all the tasks, demonstrating the possibility of the NMT in enhancing overall task performance without compromising on any specific objective.

We also compares NMT with traditional weights adjustment methods. The weights adjustment methods directly sums the loss functions of the two tasks and manually adjusts the coefficients of the loss functions respectively:

\begin{equation}
    L = \alpha_1 L_1 + \alpha_2 L_2
\end{equation}

where $\alpha_1$ and $\alpha_2$ are tuned manually.

In order to fully demonstrate all the possibilities of the weights adjustment method, we adjusted as many hyperparameter combinations as possible. Figure \ref{fig:diffweight} show the AUC performance on the TikTok dataset with different combination of loss weights (from 0.1:0.9 to 0.5:0.5). The dotted lines represent the trade-offs between the two metrics as the weights shift. The star markers indicate the performance of models optimized by NMT. Notably, these NMT-optimized models significantly outperform their standard counterparts, achieving results far beyond the capabilities of the traditional loss weight adjustments. The NMT optimization framework consistently pushing the boundaries of individual tasks' performance which highlight the effectiveness of the NMT framework in optimizing both high-priority and secondary tasks simultaneously.

\begin{figure}[h]
   
    \centering
    \includegraphics[width=0.38\textwidth]{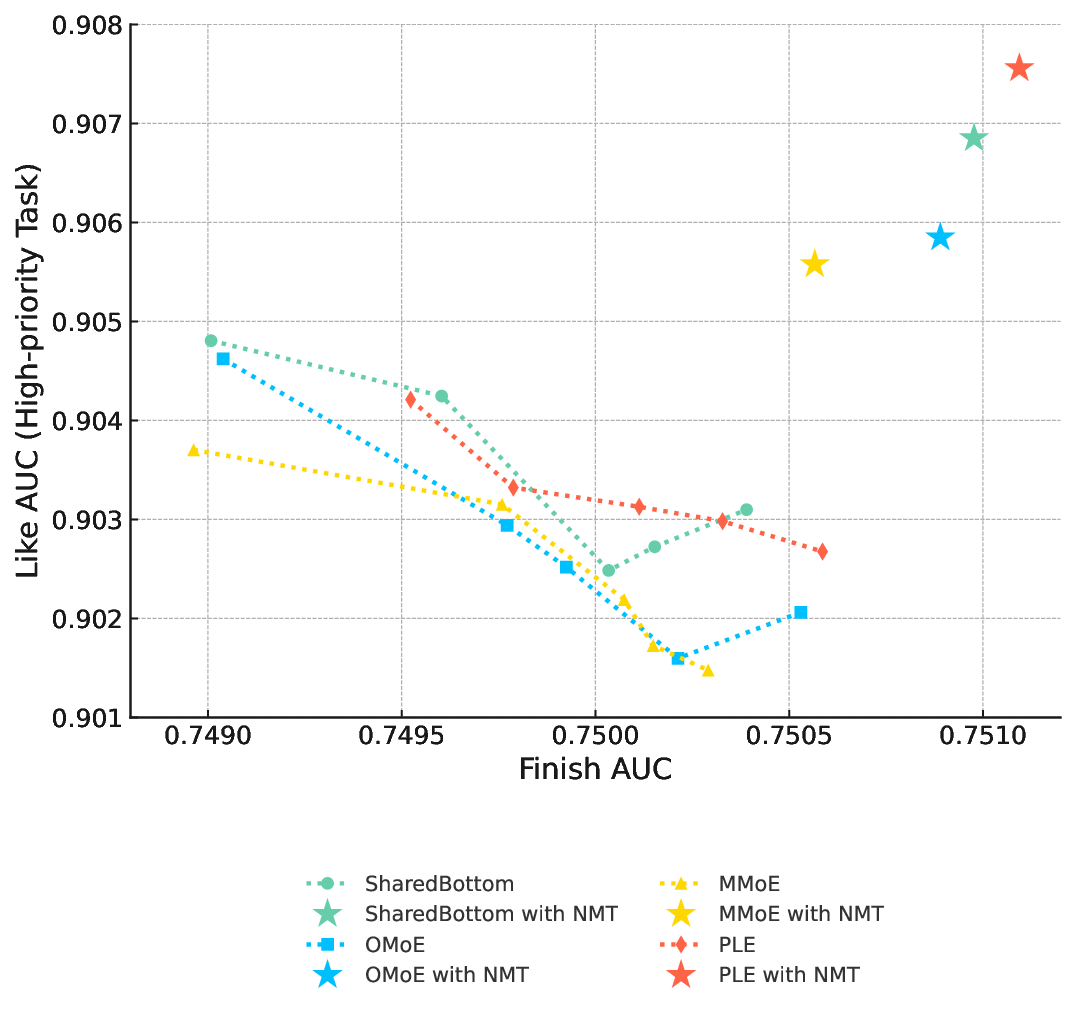}
    \caption{ AUC performance comparison for different model configurations across Like and Finish tasks. The colored lines are the performance of different models under different adjusted weights in loss function, and the star markers with the same color are the performance of respective NMT optimized model. The NMT-enhanced models demonstrate a significant improvement over their standard counterparts.}
    \label{fig:diffweight}
\end{figure}

\subsection{Online Experimental Results}
\subsubsection{Experimental Setup}
In our experiments, we aim to balance multiple business objectives within Taobao search, a large-scale online e-commerce search system that serves billions of active users. Our objectives are order volume, GMV (Gross Merchandise Value), and relevance, with order volume being the highest priority, followed by GMV, and relevance being the lowest priority. Order volume represents the number of successful transactions, GMV denotes the total monetary value of these transactions, and relevance assesses how well the search results align with user intent.

In contrast to public datasets commonly used in MTL research, where models typically output multiple scores to address different tasks, online ranking systems must provide a single score for final ranking. This requirement intensifies the conflict between tasks, as they must be integrated into a unified output score.

We utilize a standard Learning to Rank (LTR) paradigm with a deep neural network (DNN) architecture and the pairwise loss function \cite{exp:pairwise}. For the order volume task (\textit{pay}), we generate sample pairs by comparing positive samples (transactions with purchases) to negative samples (non-purchase interactions). For the GMV task (\textit{amount}), pairs are created by contrasting high-value transactions with lower-value transactions. For the relevance task (\textit{relevance}), pairs consist of relevant items versus non-relevant items, where relevant items are considered positive and non-relevant items are considered negative. The pairwise loss is formulated as:

\begin{equation}
    L = -\log\left(\text{sigmoid}(z_{pos} - z_{neg})\right)
\end{equation}

where $z$ is the output logit of the DNN. 

In our experiment, we consider \textit{pay} as the first priority task, \textit{amount} as the second priority task, and \textit{relevance} as the third priority task. We use a single-task model with only the \textit{pay} task as the baseline. 

\subsubsection{Performance Evaluation}

Table \ref{tab:online_result} shows the differences in business metrics between the multi-task ranking model and the baseline ranking model. 

\begin{table}[h]
    \centering
    \caption{Business Metrics of A/B Tests. Weight adjustment balances task weight manually. Experiments without annotations use the NMT framework to optimize multiple tasks. Baseline is a ranking model with only \textit{pay} task.}
    \label{tab:online_result}
    \resizebox{1.0\columnwidth}{!}{
        \begin{tabular}{@{}cccc@{}}
            \toprule
            Task & \makecell{Order\\Volume} & GMV & Relevance \\ \midrule
            \textit{pay} + \textit{relevance} & +0.26\% & +0.15\% & +0.72\% \\
            \makecell{\textit{pay} + \textit{relevance} \\ (weight adjustment)} & -0.35\% & -0.13\% & +0.42\% \\
            \textit{pay} + \textit{amount} & -0.05\% & +0.57\% & -0.24\% \\
            \textit{pay} + \textit{amount} + \textit{relevance} & -0.04\% & +0.49\% & +0.51\% \\ \bottomrule
        \end{tabular}
    }
\end{table}

We assessed the performance of integrating multiple tasks into a unified model using our proposed NMT framework. For the \textit{pay} + \textit{relevance} task setup, incorporating the relevance task resulted in a significant improvement in relevance metrics. Notably, the number of orders, reflecting the \textit{pay} task, did not decrease but showed a slight increase, suggesting that the prioritization of the \textit{pay} task was maintained while improving relevance.

In contrast, when we applied a weight adjustment method to balance task weights, we observed a clear enhancement in relevance metrics. However, this came at the cost of a notable decline in the number of orders. This decline can be attributed to the simple weighted summation of loss functions, which adversely impacted the primary \textit{pay} objective due to conflicting influences from the additional tasks.

The experiments with the \textit{pay} + \textit{amount} task demonstrated similar findings. We achieved a significant improvement in the \textit{amount} task while preserving the metrics for the \textit{pay} task. This confirms that our approach effectively maintains the performance of higher-priority tasks while enhancing secondary objectives.

To further validate the effectiveness of the NMT framework across multiple tasks, we extended our analysis to the \textit{pay} + \textit{amount} + \textit{relevance} setup. The results indicate that even when optimizing for all three tasks simultaneously, the NMT framework successfully preserved the performance of the highest-priority task (\textit{pay}) without any loss. Simultaneously, there was noticeable improvement in the metrics for the \textit{amount} and \textit{relevance} tasks.

\subsubsection{Optimization Process Visualization}

\begin{figure}[t]
    \centering
    \includegraphics[width=0.45\textwidth]{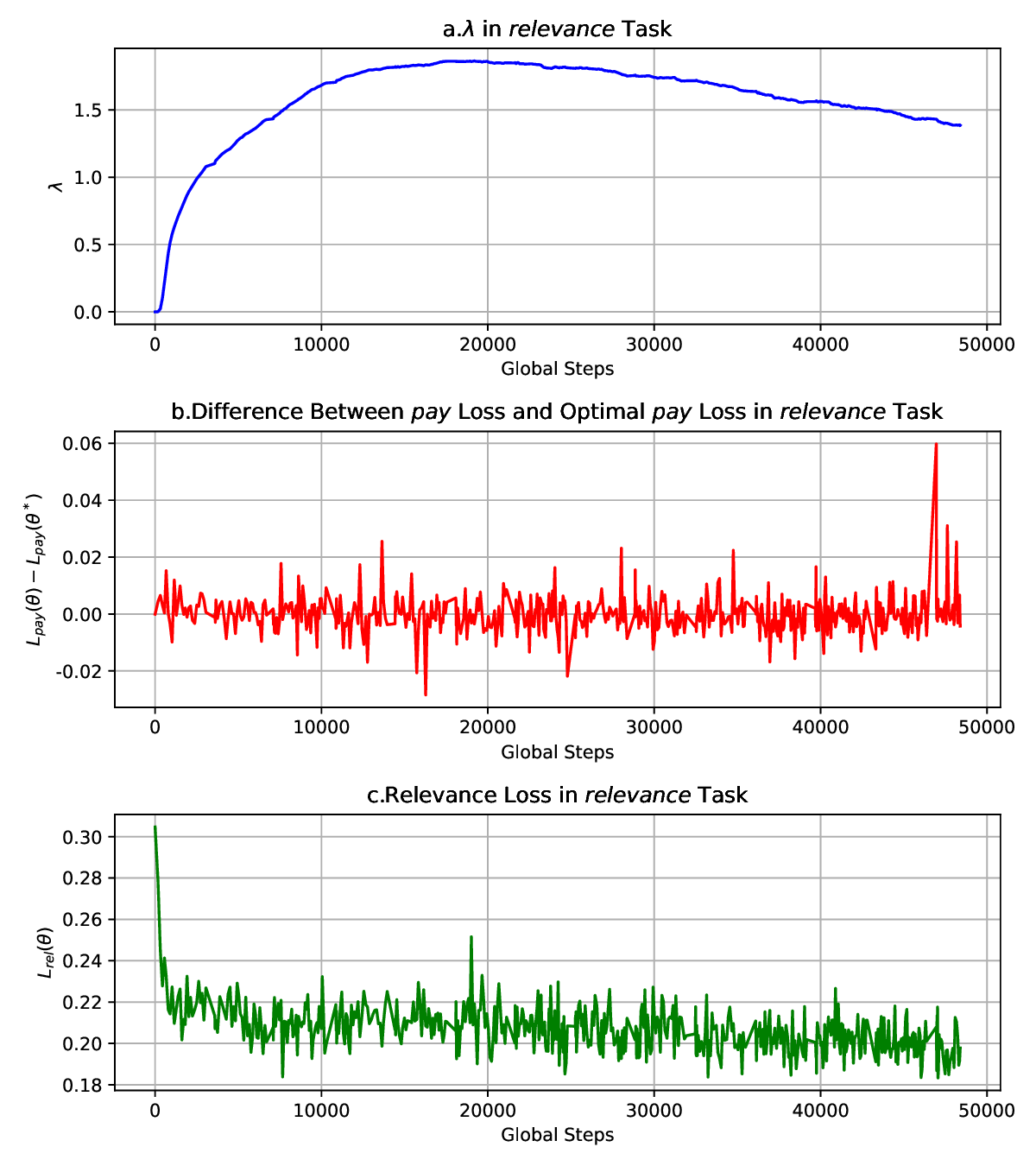}
    \caption{Training metrics of secondary task \textit{relevance} of \textit{pay} (High priority) + \textit{relevance} (Low priority). The top line shows the  $\lambda$ during training for the \textit{pay} + \textit{relevance} tasks. The middle line illustrates the fluctuation of the \textit{pay} task loss around its optimal value. The bottom line displays the loss function of the secondary task \textit{relevance} during the training process.}
    \label{fig:lambda_fig}
\end{figure}

To further illustrate the optimization process of the NMT framework, we analyze the \textit{pay} + \textit{relevance} task. In NMT framework, the optimization object of constrained \textit{relevance} task can be written as:

\begin{equation}
\setlength\arraycolsep{1.5pt}
\begin{array}{l@{\qquad}  c r c r}
    \displaystyle \min_{\theta} & \quad L(\theta) \\
         \text{where}  & \quad L(\theta)=L_{rel}(\theta) + \lambda(L_{pay}(\theta) - L_{pay}(\theta^*))  \\
\end{array}
\label{eq:constrain}
\end{equation}

The optimization process is depicted in Fig.\ref{fig:lambda_fig}. Initially, in Fig.\ref{fig:lambda_fig}.a, the constraint coefficient $\lambda$ is set to $0$, leading to a gradual increase in its value during the early stages of training. This increase indicates that while optimizing for the \textit{relevance} task alone, the \textit{pay} task struggles to maintain its optimal performance. However, as $\lambda$ rises, the \textit{pay} task loss stabilizes near its optimal value, with minor fluctuations. Throughout training, as Fig.\ref{fig:lambda_fig}.b shows, the \textit{pay} task loss remains close to its optimal point, demonstrating that the NMT framework effectively prioritizes the \textit{pay} task. Meanwhile, as we can see in Fig.\ref{fig:lambda_fig}.c, the loss of the \textit{relevance} task gradually decreases, indicating steady \textit{relevance} improvement as the training progresses. As $\lambda$ gradually converges, the model finds a balance between the \textit{pay} and \textit{relevance} tasks, ensuring that the \textit{pay} task achieves optimal performance while also enhancing the \textit{relevance} task.

Overall, the experiments underscore the efficacy of the NMT framework in balancing multiple objectives. By prioritizing the primary task while achieving substantial improvements in secondary tasks, the framework demonstrates robust performance in ranking scenarios in online industrial e-commerce systems, reflecting its practical utility in real-world applications.

\section{Conclusion}

No More Tuning (NMT) aims to solve a problem of multi-objective learning with different priorities that has long been ignored by the community. NMT effectively addressed the challenges of task prioritization and tedious hyper-parameter tuning. Through a constrained optimization approach, NMT ensures that primary tasks are optimized without compromising their performance while still refining secondary tasks. This method eliminates the need for manual parameter adjustments, provides theoretical analysis for task performance, and offers easy integration with existing gradient descent-based MTL methods. The broad applicability and proven effectiveness of NMT across various domains, from public datasets to large-scale industrial systems, highlight its versatility and potential for wide adoption.

\bibliography{aaai25}

\onecolumn
\begin{appendix}

\section{Appendix}
\subsection{Proof of Proposition 2}

To establish the convexity of the perturbation function \( P(\xi) \), we need to show that for any \(\xi_1, \xi_2 \in \mathbb{R}^m\) and \( t \in (0,1) \), the following inequality holds:

\begin{equation}
P(t * \xi_1 + (1-t)* \xi_2) \leq  t P(\xi_1) + (1-t) P(\xi_2)
\label{eqq1}
\end{equation}
\\
\noindent If either \(\xi_1\) or \(\xi_2\) does not have a feasible solution, then \(P(\xi_1)\) or \(P(\xi_2)\) would be positive infinity, making the inequality trivially satisfied.

\noindent For the remaining cases where perturbations keep the problem feasible, denote \(\theta^*(\xi_1)\) and \(\theta^*(\xi_2)\) as the optimal feasible parameters that satisfy \(P(\xi_1) = f(\theta^*(\xi_1))\) and \(P(\xi_2) = f(\theta^*(\xi_2))\), respectively.

\noindent Consider \(\theta_t = t \theta^*(\xi_1) + (1-t) \theta^*(\xi_2)\), which is a convex combination of \(\theta^*(\xi_1)\) and \(\theta^*(\xi_2)\).

\noindent Under Assumption2, where $f_i$ is Lipschitz continuous, we have: 
\begin{equation}
\|f_i(\theta_t) - f_i(\theta^*(\xi))\| \leq L \|\theta_t - \theta^*(\xi)\|,
\label{eqq2}
\end{equation}
where L is a constant.

\noindent Based on (\ref{eqq2}), we can analyze the following bounds:
\begin{equation}
\begin{aligned}
\|f_i(\theta_t) - f_i(\theta^*(\xi_1))\| & \leq L \|t \theta^*(\xi_1) + (1-t) \theta^*(\xi_2) - \theta^*(\xi_1)\| \\ & \leq L (1-t) \|\theta^*(\xi_2) - \theta^*(\xi_1)\|,
\end{aligned}
\end{equation}

\begin{equation}
\begin{aligned}
\|f_i(\theta_t) - f_i(\theta^*(\xi_2))\| &\leq L \|t \theta^*(\xi_1) + (1-t) \theta^*(\xi_2) - \theta^*(\xi_2)\| \\ &\leq L t \|\theta^*(\xi_2) - \theta^*(\xi_1)\|.
\end{aligned}
\end{equation}
Combining these bounds:
\begin{equation}
\begin{aligned}
\|f_i(\theta_t) - \left(t f_i(\theta^*(\xi_1)) + (1-t) f_i(\theta^*(\xi_2))\right)\| &\leq t \|f_i(\theta_t) - f_i(\theta^*(\xi_1))\| + (1-t) \|f_i(\theta_t) - f_i(\theta^*(\xi_2))\| \\
&\leq 2t(1-t)L \|\theta^*(\xi_2) - \theta^*(\xi_1)\|  \\
&\leq \frac{L}{2} \|\theta^*(\xi_2) - \theta^*(\xi_1)\|.
\end{aligned}
\end{equation}
We then write:
\begin{equation}
f_i(\theta_t) = t f_i(\theta^*(\xi_1)) + (1-t) f_i(\theta^*(\xi_2)) + \mathcal{O}(\epsilon),
\end{equation}
where $\epsilon = \|\theta^*(\xi_2) - \theta^*(\xi_1)\|.$

\noindent Since \( \epsilon \) is proportional to the distance \( \|\theta^*(\xi_2) - \theta^*(\xi_1)\| \), and given that \(\epsilon\) is small enough, \(\mathcal{O}(\epsilon)\) is negligible, and \(f_i\) behaves approximately linearly in the convex region.\\

\noindent As $\theta^*(\xi_1)$ and $\theta^*(\xi_2)$ are feasible solutions, they satisfy the constraints:  $f_i(\theta^*(\xi_1)) \leq f_i(\theta^*) + r_i - \xi_{1i}$ and $f_i(\theta^*(\xi_2)) \leq f_i(\theta^*) + r_i - \xi_{2i}$.

\noindent Thus:
\begin{equation}
\begin{aligned}
f_i(t \theta^*(\xi_1) + (1-t) \theta^*(\xi_2)) &\leq t f_i(\theta^*(\xi_1)) + (1-t) f_i(\theta^*(\xi_2)) \\
&\leq t (f_i(\theta^*) + r_i - \xi_{1i}) + (1-t)(f_i(\theta^*) + r_i - \xi_{2i})  \\
&\leq f_i(\theta^*) + r_i - (t\xi_{1} +(1-t)\xi_2)_i.
\end{aligned}
\end{equation}
Therefore, $ (1-t)\theta^*(\xi_1) + t \theta^*(\xi_2)$ is also a feasible parameter set that satisfies all the constraints of $P(t * \xi_1 + (1-t)* \xi_2)$. 

\noindent From the definition of the perturbation function, we have:
\begin{equation}
\begin{aligned}
P(t * \xi_1 + (1-t)* \xi_2)  &\leq f (t * \theta^*(\xi_1) + (1-t) * \theta^*(\xi_2)) \\ 
&\leq  tf (\theta^*(\xi_1))  + (1-t)f (\theta^*(\xi_2)) \\ &\leq tP(\xi_1) + (1-t)P(\xi_2)  
\end{aligned}
\label{eqq3}
\end{equation}

\noindent We have thus proven that \eqref{eqq1} holds, completing the proof.

\subsection{Time Complexity Analysis: From Exponential to Linear } 
Training efficiency is another key advantage of NMT. Existing MTL methods rely on grid search to determine the optimal task weights, balancing the prioritization of the primary task with the optimization of secondary tasks. For an $m$-task learning problem, the time complexity of grid search is $O(p^m)$, where $p$ represents the number of grid points. This is because grid search exhaustively evaluates all possible combinations of task weights, and for each task, $p$ grid points are tested. Consequently, the total number of combinations to evaluate grows exponentially with the number of tasks, resulting in $p^m$ evaluations.

In contrast, NMT only requires $m$ sequential steps to achieve the optimal solution, resulting in a time complexity of $O(m)$. This drastic reduction in time complexity, from exponential to linear, demonstrates the efficiency of NMT, making it highly scalable for problems with many tasks. 

\subsection{Experiments}

\subsubsection{Training Detail of Offline Experiments}
The learning rates for the model parameters \(\theta\) were varied across \(10^{-4}\), \(5 \times 10^{-4}\), and \(10^{-3}\), while the learning rates for the Lagrange multipliers \(\lambda\) were varied across \(10^{-2}\), \(5 \times 10^{-2}\), and \(10^{-1}\). Given the nine possible combinations, a grid search was conducted to identify the most suitable learning rate.
ReLU was used as the activation function for hidden layers. The hidden units for the expert and bottom MLPs were configured as [512, 512, 512], whereas the gate and tower MLPs had hidden units of [128, 64]. An L2 regularization of \(1 \times 10^{-6}\) was applied to the embeddings. The number of task-specific and shared experts was set to 1, and the batch size for training was 4096. All implementations were done in PyTorch, using the Adam optimizer\cite{exp:adam}. We adopted the same data pre-processing pipeline as the previous work in the same datasets \cite{mtl:STEM}. 

\subsubsection{Integration with Gradient-based Methods} 
In existing research and practice, parameter-sharing networks and gradient-based methods represent two major approaches to Multi-Task Learning (MTL). In the main paper, we explore the integration of the NMT framework with parameter-sharing networks, including Shared-Bottom \cite{mtl:Sharedbottom}, OMoE \cite{mtl:OMoE}, MMoE \cite{mtl:OMoE}, and PLE \cite{mtl:PLE}. This section focuses on the integration of NMT methods with gradient-based MTL approaches.

For simplicity, we first consider a dual-task MTL problem. In this context, the NMT framework optimizes the primary task \( f_1(\theta) \) in the first stage, followed by the optimization of the secondary task \( f_2(\theta) \) in the second stage, subject to the constraint that \( f_1(\theta^*) \), the optimal solution for the primary task, remains unchanged, as described in equation~\ref{eq:constrain_two_task}. The constrained stage of NMT can be viewed as a dual-task learning problem, consisting of the secondary task \( f_2(\theta) \) and the primary task constraint \( \lambda(f_2(\theta) - f_1(\theta^*)) \). From this perspective, any gradient-based method can be applied in the constrained stage of NMT.

We apply the state-of-the-art gradient-based MTL method, FAMO~\cite{famo}, within the constrained optimization framework of NMT. The experimental results are presented in Table~\ref{tab:famo_nmt}. For the primary task (i.e., "Like"), FAMO exhibits a substantial performance gap compared to NMT, primarily due to its lack of emphasis on the primary task. In the FAMO+NMT experiment, we observe an improvement in the AUC of the primary task, surpassing the performance of NMT alone. This demonstrates that NMT serves as a complementary framework, enhancing the performance of state-of-the-art MTL methods by prioritizing task importance rather than directly competing with them.

\begin{table*}[h]
\centering
\caption{Integration of gradient-based methods and NMT.}
\label{tab:famo_nmt}

\begin{tabular}{lcccccc}
\toprule
\textbf{}       & \multicolumn{2}{c}{\textbf{NMT}} & \multicolumn{2}{c}{\textbf{FAMO}} & \multicolumn{2}{c}{\textbf{FAMO+NMT}} \\ 
\cmidrule(lr){2-3} \cmidrule(lr){4-5} \cmidrule(lr){6-7}
\textbf{}       & \textbf{Finish AUC} & \textbf{Like AUC} & \textbf{Finish AUC} & \textbf{Like AUC} & \textbf{Finish AUC} & \textbf{Like AUC} \\ 
\midrule
\textbf{ShareBottom} & +0.06\%            & +0.38\%          & +0.07\%            & +0.21\%          & +0.04\%            & +0.41\%          \\ 
\textbf{OMoE}       & +0.04\%            & +0.38\%          & +0.05\%            & +0.11\%          & +0.07\%            & +0.43\%          \\ 
\textbf{MMoE}       & +0.03\%            & +0.41\%          & +0.05\%            & +0.14\%          & +0.04\%            & +0.40\%          \\ 
\textbf{PLE}        & +0.05\%            & +0.49\%          & +0.11\%            & +0.05\%          & +0.06\%            & +0.49\%          \\ 
\bottomrule
\end{tabular}

    \label{table1}
\end{table*}



\end{appendix}

\end{document}